\documentclass[journal]{IEEEtran}
\usepackage[ruled,vlined]{algorithm2e}
\SetAlgoSkip{6pt}      
\SetAlgoInsideSkip{1pt}

\usepackage{algcompatible}
\usepackage{algpseudocode}
\algnewcommand\algorithmicinput{\textbf{Input:}}
\algnewcommand\INPUT{\item[\algorithmicinput]}
\algrenewcommand\algorithmicindent{1.2em} 
\usepackage{cite}
\usepackage[subtle]{savetrees}
\usepackage{verbatim}
\usepackage{subcaption}
\usepackage{graphicx}
\usepackage{siunitx}
\usepackage{amsmath,amssymb,amsfonts,bm}
\usepackage{epsf}
\usepackage{tikz}
\usepackage{psfrag}
\usepackage{pstool}
\usepackage{epstopdf}
\usepackage{bbm} 
\usepackage{booktabs}
\usepackage{stfloats}
\usepackage{url}
\usepackage[dvips]{epsfig}
\usepackage{amsthm}
\usepackage[usenames,dvipsnames]{pstricks}
\usepackage{lettrine}
\usepackage[dvips]{epsfig}
\usepackage{pst-grad} 
\usepackage{pst-plot} 
\usepackage{epsfig}
\usepackage{enumerate}
\usepackage{amsbsy}
\usepackage{amssymb}
\usepackage{amsthm}
\usepackage{amscd}
\usepackage{float}
\usepackage[caption = false]{subfig}
\usepackage[numbers,sort&compress,square]{natbib}

\usepackage{listings}
\usepackage{stackrel}
\usepackage{authblk}
\usepackage{dsfont}
\usepackage[normalem]{ulem}
\makeatletter
\newcommand{\removelatexerror}{\let\@latex@error\@gobble}
\makeatother
\newtheorem{thm}{Theorem}

\newtheorem{prop}{Proposition}

\usepackage{stackengine}

\definecolor{mintbg}{rgb}{.63,.79,.95}
\usepackage{enumitem}
\usepackage{xcolor}

\begin{document}
%
\title{{FedCritic: Serverless Federated Critic Learning-based Resource Allocation for Multi-Cell OFDMA in 6G}}

\author{Amin Farajzadeh, \textit{Member}, \textit{IEEE}, Melike Erol-Kantarci, \textit{Fellow}, \textit{IEEE}\\
\vspace{-3mm}\textit{School of Electrical Engineering and Computer Science, University of Ottawa, Ottawa, Canada}\\
Emails:\{amin.farajzadeh, melike.erolkantarci\}@uottawa.ca
}
\pagenumbering{gobble}
\makeatletter

\patchcmd{\@maketitle}
  {\addvspace{0.5\baselineskip}\egroup}
  {\addvspace{-0.4\baselineskip}\egroup}
 {}
 {}
\makeatother

\maketitle


\begin{abstract}
In sixth-generation (6G) ultra-dense networks, aggressive frequency reuse amplifies inter-cell interference (ICI), making multi-cell orthogonal frequency-division multiple access (OFDMA) scheduling and power control strongly coupled across neighboring cells. We study distributed downlink resource management---joint subcarrier scheduling and power allocation---under interference coupling and long-term per-user quality-of-service (QoS) minimum-rate constraints. By using virtual-queue deficit weights to enforce long-term QoS, we develop FedCritic, a serverless federated multi-agent actor-critic framework with decentralized execution. Unlike centralized training with decentralized execution (CTDE) approaches that require centralized critic learning and joint trajectory aggregation, FedCritic federates the critic through lightweight gossip-based parameter averaging over the interference graph, enabling stable value estimation without a central coordinator while keeping policies local. Simulations in an interference-rich reuse-1 setting show that FedCritic improves mean signal-to-interference-plus-noise ratio (SINR) and cell-edge rate, increases network-wide average sum-rate and fairness relative to non-coordinated and CTDE baselines, and achieves more stable training with lower coordination overhead.
\end{abstract}

\begin{IEEEkeywords}
6G, multi-cell OFDMA, resource allocation, multi-agent reinforcement learning (MARL), serverless federated reinforcement learning.
\end{IEEEkeywords}

%
\IEEEpeerreviewmaketitle

\vspace{-2mm}
\section{Introduction}
\label{sec:introduction}

Sixth-generation (6G) networks in the IMT-2030 timeframe target higher area capacity, enhanced uniform user experience, and stricter latency/reliability under heterogeneous spectrum and ultra-dense deployments \cite{6G_IMT}. These objectives are expected to be pursued via additional spectrum (including upper-midband/cmWave) and denser reuse-1 operation, which strengthens cross-cell coupling and makes inter-cell interference (ICI) a primary limiter of cell-edge throughput and service consistency~\cite{6G_intro}.
\enlargethispage{-2.3\baselineskip}

Orthogonal frequency-division multiple access (OFDMA) is expected to remain central in wideband radio access networks (RANs) due to its fine time--frequency granularity and compatibility with link adaptation and multi-antenna transmission~\cite{OFDMA_ICI_6G}. In reuse-1 multi-cell operation, however, per-cell signal-to-interference-plus-noise ratio (SINR) and rate depend on simultaneous neighbor transmissions, turning ICI mitigation into a network-coupled control problem in the ultra-dense regime~\cite{dens-ICI}. Among the available interference-management levers, subcarrier-level scheduling and power allocation are particularly effective in reuse-1 OFDMA systems because they directly control which links share each subcarrier and at what transmit power. This enables selective protection of dominant victim–interferer pairs at fine time–frequency granularity while still preserving the spectral-efficiency benefits of aggressive frequency reuse~\cite{RRM-ICI-intro}.

Conventional coordination becomes difficult in interference-dominant multi-cell OFDMA~\cite{RRM-ICI-intro}. The problem is high-dimensional, mixes discrete scheduling with continuous power control, and often relies on timely global information and reliable backhaul. This motivates learning-based control. Multi-agent reinforcement learning (MARL) is well aligned with multi-cell resource management because it can optimize long-term objectives under dynamic interference coupling~\cite{MARL-survey}. Many recent works adopt centralized training and decentralized execution (CTDE) actor--critic learning, where a centralized critic leverages joint information to stabilize training while execution remains local at each BS~\cite{CTDE}, e.g., proximal policy optimization (PPO)/MAPPO-style cooperative training frameworks~\cite{MAPPO-WCNC}. However, CTDE typically requires joint trajectory collection and broad information sharing, which limits scalability as the network size and action dimension grow~\cite{resource-MARL-1}.

Recent work has also explored federated RL and MARL for wireless resource management~\cite{survey-FRL}. On the federated RL side, prior studies considered server-based aggregation~\cite{Fed-DRL} and peer-to-peer exchange for decentralized time--frequency control~\cite{decentralized-FRL-heterogeneous}. On the MARL side, related works investigated joint resource-block allocation and power control in multi-cell systems~\cite{MADRL-lit}, dynamic resource management~\cite{resource-MARL}, cooperative user association~\cite{MAPPO-assoc}, and scheduling policies~\cite{MARL-scheduling}. However, scalable and coordination-efficient learning for interference-rich multi-cell OFDMA with joint subcarrier scheduling and power control remains largely unaddressed.
\enlargethispage{-2.3\baselineskip}

We address this gap through a fully decentralized federated MARL framework tailored to reuse-1 OFDMA. Each BS learns from local experience, while only critic parameters are exchanged and averaged among neighboring BSs through gossip over the interference graph. This removes the centralized critic bottleneck of CTDE while preserving critic cooperation and local execution. To the best of our knowledge, this is among the first works to jointly study subcarrier scheduling, discrete power control, long-term per-UE minimum-rate QoS via virtual queues, and serverless neighbor-only critic federation in interference-rich reuse-1 multi-cell OFDMA.

Simulations in an interference-rich multi-cell reuse-1 setting show that the proposed approach improves SINR and sum-rate distributions while achieving higher network-wide average sum-rate and fairness than CTDE baselines and non-learning heuristics, with faster and more stable training under substantially lower coordination overhead.

The remainder of this paper is organized as follows. Section~\ref{sec:system_model} presents the system model and problem formulation. Section~\ref{sec:learning_framework} describes the proposed learning framework. Section~\ref{sec:sim_results} reports simulation results, and Section~\ref{conclusion} concludes the paper.

\section{System Model}\label{sec:system_model}
\subsection{Network Setting and Channel Modeling}

We consider the downlink of a multi-cell OFDMA network composed of $N$ BSs sharing a total bandwidth $B$, partitioned into $K$ orthogonal subcarriers of equal width $\Delta f = B/K$. Each BS $n\in\{1,\dots,N\}$ serves a set of user equipments (UEs) $\mathcal{M}_n$. Time is slotted with index $t=0,1,2,\ldots$, and in each slot $t$, scheduling and power allocation decisions are made online.

Within each slot, BS $n$ schedules at most one UE on each subcarrier $k$ using binary variables $x_{n,k,m}(t)\in\{0,1\}$, which satisfy the intra-cell OFDMA constraint
$\sum_{m\in\mathcal{M}_n} x_{n,k,m}(t) \le 1$.
The transmit power allocated by BS $n$ on subcarrier $k$ in slot $t$ is denoted by $p_{n,k}(t)\ge 0$ and is constrained by the per-BS power budget $\sum_{k=1}^{K} p_{n,k}(t)\le P_n$.

Let $h_{n,k,m}(t)$ denote the small-scale fading coefficient from BS $n$ to UE $m\in\mathcal{M}_n$ on subcarrier $k$ at slot $t$, and let $\alpha_{n,k,m}>0$ denote the large-scale channel gain capturing path loss and log-normal shadowing. We model $\alpha_{n,k,m}=\exp(z_{n,k,m})$, where $z_{n,k,m}\sim\mathcal{N}(\mu_{\mathrm{PL}},\sigma_{\mathrm{PL}}^{2})$, and $\mu_{\mathrm{PL}}$ and $\sigma_{\mathrm{PL}}$ denote the mean and standard deviation of the log-domain large-scale coefficient. The resulting channel power gain is
\vspace{-1mm}
\begin{equation}\label{eq:channel}
g_{n,k,m}(t) = \alpha_{n,k,m}\, |h_{n,k,m}(t)|^2 .
\end{equation}
The small-scale fading is modeled as a first-order complex Gauss--Markov process~\cite{Fed-DRL},
\vspace{-2mm}
\begin{equation}\label{eq:jakes}
h_{n,k,m}(t) = \rho\, h_{n,k,m}(t-1) + \sqrt{1-\rho^2}\, w_{n,k,m}(t),
\end{equation}
where $w_{n,k,m}(t)\sim\mathcal{CN}(0,1)$ is i.i.d.\ across $(n,k,m,t)$ and $0\le\rho<1$ is the temporal correlation coefficient, set according to the Doppler frequency and slot duration. The channel is assumed constant within each slot and evolves across slots according to \eqref{eq:jakes}.
\enlargethispage{-2.3\baselineskip}

For notational convenience, we define the subcarrier activity indicator
$
a_{n,k}(t) \triangleq \sum_{m\in\mathcal{M}_n} x_{n,k,m}(t) \in \{0,1\},
$
which indicates whether BS $n$ transmits on subcarrier $k$ in slot $t$. The downlink SINR experienced by UE $m$ on subcarrier $k$ in slot $t$ is
\begin{equation}\label{eq:sinr}
\gamma_{n,k,m}(t)
=
\frac{p_{n,k}(t)\, g_{n,k,m}(t)}
{\displaystyle \sum_{n'\neq n} a_{n',k}(t)\, p_{n',k}(t)\, g_{n',k,m}(t) + N_0 \Delta f},
\vspace{-1mm}
\end{equation}
where $N_0$ denotes the one-sided noise power spectral density (PSD). For $n'\neq n$, $g_{n',k,m}(t)$ denotes the cross-link power gain from interfering BS $n'$ to UE $m\in\mathcal{M}_n$ on subcarrier $k$.

The instantaneous rate served by BS $n$ on subcarrier $k$ is
\vspace{-1mm}
\begin{equation}
c_{n,k}(t) = \sum_{m\in\mathcal{M}_n} x_{n,k,m}(t)\, \Delta f \log_2\!\bigl(1+\gamma_{n,k,m}(t)\bigr),
\vspace{-1mm}
\end{equation}
and the per-UE and per-BS transmission rates in slot $t$ are, respectively,
\vspace{-3mm}
\begin{align}
R_{n,m}(t) &= \sum_{k=1}^{K} x_{n,k,m}(t)\, \Delta f \log_2\!\bigl(1+\gamma_{n,k,m}(t)\bigr), \label{rate-UE}\\
c_n(t) &= \sum_{k=1}^{K} c_{n,k}(t).
\end{align}

\subsection{Optimization Problem Formulation}

In each slot $t$, the controller at each BS jointly selects the subcarrier--UE scheduling and the per-subcarrier transmit powers to maximize the instantaneous network-wide downlink sum-rate, while satisfying OFDMA orthogonality and per-BS power budgets. The per-UE QoS requirement is defined as a long-term average minimum rate constraint and therefore cannot be enforced directly within a single-slot optimization. Instead, it is handled by an online virtual-queue control mechanism that converts the long-term constraint into time-varying per-slot weights. This yields the following per-slot surrogate optimization problem:
\vspace{-2mm}
\begin{subequations}\label{prob:sumrate_slot}
\begin{align}
\hspace*{-4.5mm} \max_{{\small \mathbf{X}(t),\,\mathbf{P}(t)}} 
& {\small\sum_{n=1}^{N}\sum_{k=1}^{K} c_{n,k}(t)
-\sum_{n=1}^{N}\sum_{m\in\mathcal{M}_n} Q_{n,m}(t)\,\big(R^{\min}_{n,m}-R_{n,m}(t)\big)}
\label{prob:obj_slot}\\
\text{s.t.}\hphantom{0}
& \sum_{m\in\mathcal{M}_n} x_{n,k,m}(t) \le 1, 
\qquad \hphantom{10000000000} \forall n, k, \label{cons:ofdma_slot} \\
& \sum_{k=1}^{K} p_{n,k}(t) \le P_n,\hphantom{0} p_{n,k}(t)\ge 0, 
\qquad \hphantom{00000}\forall n, \label{cons:power_slot} \\
& 0 \le p_{n,k}(t) \le P_n\, a_{n,k}(t),
\qquad \hphantom{00000000} \forall n,k, \label{cons:power-activity_slot}\\
& x_{n,k,m}(t)\in\{0,1\}, 
\qquad \hphantom{00000}\forall n,k,\; m\in\mathcal{M}_n, \label{cons:binary_slot}
\end{align}
\end{subequations}
where $\mathbf{X}(t)=\{x_{n,k,m}(t)\}$ denotes the binary scheduling variables, and $\mathbf{P}(t)=\{p_{n,k}(t)\}$ denotes the per-subcarrier transmit powers.
Constraint~\eqref{cons:ofdma_slot} enforces intra-cell OFDMA orthogonality, \eqref{cons:power_slot} imposes the per-BS power budget, and \eqref{cons:power-activity_slot} couples power allocation to scheduling decisions to ensure that power is used only on active (scheduled) subcarriers. 

The coefficients $\{Q_{n,m}(t)\}$ in \eqref{prob:obj_slot} are virtual queues that quantify nonnegative QoS deficit pressure and evolve over time based on past rate shortfalls. The update is
\enlargethispage{-2.3\baselineskip}
\begin{equation}\label{eq:vq_update}
Q_{n,m}(t{+}1)=\Big[\,Q_{n,m}(t)+R^{\min}_{n,m}-R_{n,m}(t)\,\Big]^+,\:\: \forall n,\; m\in\mathcal{M}_n,
\vspace{-2mm}
\end{equation}
where $[\cdot]^+ \triangleq \max\{\cdot,0\}$. This increases $Q_{n,m}(t)$ when UE $m$ falls below $R^{\min}_{n,m}$ and decreases it otherwise. Since $\sum_{n,m} Q_{n,m}(t)R^{\min}_{n,m}$ is constant at slot $t$, \eqref{prob:obj_slot} equivalently maximizes a weighted served-rate objective that prioritizes UEs with larger deficits. As a result, repeatedly solving \eqref{prob:sumrate_slot} online steers the system toward satisfying long-term average QoS while still prioritizing instantaneous sum-rate.

Problem~\eqref{prob:sumrate_slot} is a mixed-integer nonconvex program due to binary scheduling decisions and interference-coupled SINR expressions under reuse-1, making centralized real-time optimization impractical at scale. This motivates an online decentralized learning formulation in which each BS learns from local observations and coordinates only with neighboring BSs to account for interference coupling, without relying on a central controller.


\vspace{-1mm}
\section{Reformulation as an Interference--Coupled Dec--POMDP}
\label{sec:learning_framework}

We model the online per-slot control problem in \eqref{prob:sumrate_slot} as an interference-coupled Dec--POMDP with decentralized execution at the BSs. The model captures partial observability, since each BS has access only to local measurements and lightweight neighbor summaries, and interference coupling, since achieved rates depend on other BSs' actions through \eqref{eq:sinr}. Following the cooperative Dec--POMDP, all BSs optimize a shared team reward that matches the network-wide surrogate objective in \eqref{prob:obj_slot}. Based on this formulation, we develop an on-policy actor--critic method and introduce FedCritic, a serverless federated critic mechanism that stabilizes value estimation via periodic gossip mixing over the interference graph.
\vspace{-2mm}
\subsection{Interference--Coupled Dec--POMDP Model}

We model the network as the Dec--POMDP tuple
\vspace{-1mm}
\begin{equation}
\label{eq:decpomdp_tuple_final}
\mathfrak{D} \triangleq
\big\langle
\mathcal{S},\{\mathcal{A}_n\}_{n=1}^{N},\mathcal{P},\{\mathcal{O}_n\}_{n=1}^{N},
\{Z_n\}_{n=1}^{N}, r,\beta
\big\rangle,
\vspace{-1mm}
\end{equation}
where $\beta\in(0,1]$ is the discount factor. The global state $s(t)\in\mathcal{S}$ collects the channel variables governing the network dynamics, including the fading coefficients $\{h_{n,k,m}(t)\}$ evolving via \eqref{eq:jakes} and the large-scale gains $\{\alpha_{n,k,m}\}$. 
To preserve Markovian dynamics under long-term constraints and neighbor summaries, we include in $s(t)$ the virtual queues $\{Q_{n,m}(t)\}$ and the exponential moving average (EMA) statistics $\{\widehat{O}_{n,k}(t)\}$ used by the controller.
Each BS $n$ selects a local action $a_n(t)\in\mathcal{A}_n$, and the joint action $a(t)=(a_1(t),\dots,a_N(t))$ induces the next state through the Markov transition kernel
$
\mathcal{P}\big(s(t{+}1)\mid s(t),a(t)\big),
$
which captures the channel evolution. 
BS $n$ does not observe $s(t)$ directly; instead it receives a local observation $o_n(t)\in\mathcal{O}_n$ via a deterministic observation function $Z_n:\mathcal{S}\to\mathcal{O}_n$, i.e., $o_n(t)=Z_n(s(t))$, consisting of locally available features augmented with low-rate neighbor summaries exchanged over an interference graph. All BSs receive the same cooperative team reward $r(t)=r\big(s(t),a(t)\big)$, implemented by the network-wide shaped utility defined in Section~III-C.
\enlargethispage{-2.3\baselineskip}

\subsubsection{Interference graph and neighbor summaries}
Let the interference graph be $\mathcal{G}_I=(\{1,\dots,N\},\mathcal{E}_I)$, where $(n,j)\in\mathcal{E}_I$ indicates that BS $j$ is a dominant interferer to cell $n$, and let $\mathcal{N}(n)$ denote the neighbor set (assume $|\mathcal{N}(n)|\ge 1$; otherwise set $\overline{O}_{n,k}(t)=O^{\max}_{n,k}(t)=0$). To provide a low-overhead proxy of interference pressure, BSs exchange subcarrier-level activity statistics. Let $\widehat{O}_{n,k}(t)$ be an EMA of the activity indicator $a_{n,k}(t)$ as
\vspace{-2mm}
\begin{equation}
\label{eq:occ_ema_final}
\widehat{O}_{n,k}(t{+}1)=\alpha_O\,\widehat{O}_{n,k}(t)+(1-\alpha_O)\,a_{n,k}(t),
\end{equation}
\vspace{-2mm}
with $\alpha_O\in(0,1)$. BS $n$ forms neighbor aggregates
\begin{align}
\label{eq:neighbor_occ_final}
\overline{O}_{n,k}(t)&=\frac{1}{|\mathcal{N}(n)|}\sum_{j\in\mathcal{N}(n)}\widehat{O}_{j,k}(t),
\\
O^{\max}_{n,k}(t)&=\max_{j\in\mathcal{N}(n)}\widehat{O}_{j,k}(t),
\end{align}
which are included in $o_n(t)$.

\subsubsection{Actions and feasibility mapping}

Each BS $n$ selects actions that determine its local components of $(\mathbf{X}(t),\mathbf{P}(t))$. For each subcarrier $k$, the action is parameterized as
\vspace{-2mm}
\begin{equation}
\label{eq:action_def_final}
\tilde a_{n,k}(t)=\big(u_{n,k}(t),\,m_{n,k}(t),\,\ell_{n,k}(t)\big),
\end{equation}
where $u_{n,k}(t)\in\{0,1\}$ indicates mute/active transmission, $m_{n,k}(t)\in\mathcal{M}_n$ is the scheduled UE when $u_{n,k}(t)=1$, and $\ell_{n,k}(t)\in\{1,\dots,L\}$ selects a discrete power level $p^{(\ell)}$. Note that $u_{n,k}(t)$ coincides with the activity indicator $a_{n,k}(t)$. The induced optimization variables are
\vspace{-3mm}
\begin{align}
x_{n,k,m}(t) &= \mathbbm{1}\{u_{n,k}(t)=1,\; m_{n,k}(t)=m\},\\
p_{n,k}(t) &= u_{n,k}(t)\,p^{(\ell_{n,k}(t))}.
\end{align}
By construction, $\sum_{m\in\mathcal{M}_n}x_{n,k,m}(t)=u_{n,k}(t)$ and the intra-cell OFDMA constraint \eqref{cons:ofdma_slot} is satisfied. To enforce \eqref{cons:power_slot}--\eqref{cons:power-activity_slot}, we apply masking (forcing $p_{n,k}(t)=0$ when muted) and a per-slot power normalization (projection) to satisfy the sum-power constraint
\begin{equation}
\label{eq:power_scale_final}
p_{n,k}(t)\leftarrow p_{n,k}(t)\cdot \min\Big\{1,\frac{P_n}{\sum_{k'=1}^{K}p_{n,k'}(t)+\epsilon}\Big\},\quad \forall k,
\end{equation}
where $\epsilon>0$ is a small constant for numerical stability. This normalization preserves relative per-subcarrier levels while ensuring $\sum_k p_{n,k}(t)\le P_n$.
\vspace{-1mm}
\enlargethispage{-2.3\baselineskip}
\subsection{Reward Design and Long-Term QoS Enforcement}

To mirror the per-slot objective \eqref{prob:obj_slot}, we define each BS's base transmission rate reward as
\begin{equation}
\label{eq:reward_rate_final}
r^{\mathrm{rate}}_n(t)=c_n(t)=\sum_{k=1}^{K} c_{n,k}(t).
\end{equation}
Using the per-UE rate definition in~\eqref{rate-UE}, we incorporate QoS pressure via the deficit weights $\{Q_{n,m}(t)\}$ through
\begin{equation}
\label{eq:qos_pressure_term}
r^{\mathrm{qos}}_n(t)= \sum_{m\in\mathcal{M}_n} Q_{n,m}(t)\,R_{n,m}(t),
\end{equation}
where constant terms independent of the slot-$t$ decision are omitted.

Furthermore, to promote interference-aware operation, we include a leakage-style penalty based on long-term cross-link gain proxies. Let $\bar g_{n\to j,k}$ denote the long-term average cross-link gain from BS $n$ to cell $j$ on subcarrier $k$, obtained by time-averaging the corresponding measured cross-link gains. Define
\vspace{-2mm}
\begin{equation}
\label{eq:leakage_def_final}
L_{n}(t)
=\sum_{k=1}^{K}\sum_{j\in\mathcal{N}(n)} \eta_{nj}\, p_{n,k}(t)\,\bar g_{n\to j,k},
\end{equation}
where $\eta_{nj}\ge 0$ are fixed leakage weights. The shaped per-slot reward is
\vspace{-2mm}
\begin{equation}
\label{eq:reward_shaped_final}
\tilde r_n(t)= r^{\mathrm{rate}}_n(t) + r^{\mathrm{qos}}_n(t) - \lambda_{\mathrm{int}}\, L_{n}(t),
\end{equation}
with $\lambda_{\mathrm{int}}\ge 0$. In the cooperative Dec--POMDP, we use the team reward
\vspace{-3mm}
\begin{equation}
\label{eq:team_reward_def}
r(t)\triangleq \sum_{n=1}^{N}\tilde r_n(t),
\end{equation}
and the environment provides the same scalar $r(t)$ to all BSs.
%

The deficit weights $\{Q_{n,m}(t)\}$ evolve via the virtual-queue recursion \eqref{eq:vq_update}. Mean-rate stability of these queues implies satisfaction of the long-term QoS constraints in the Lyapunov sense; operationally, large $Q_{n,m}(t)$ increases the urgency to serve UE $m$, consistent with \eqref{prob:obj_slot}.
\vspace{-1mm}
\enlargethispage{-2.3\baselineskip}
\subsection{Federated Critic Learning With Gossip-Based Aggregation (FedCritic)}
\label{subsec:fedcritic}

Each BS $n$ runs an on-policy actor--critic learner conditioned on its local observation $o_n(t)$. The actor $\pi_{\theta_n}$ outputs a distribution over a structured, feasibility-constrained action $a_n(t)$, factorized across subcarriers. For each subcarrier $k$, the BS first decides an activity/muting variable $u_{n,k}(t)\in\{0,1\}$ and, if active, selects a scheduled UE index $m_{n,k}(t)\in\mathcal{M}_n$ and a transmit-power level index $\ell_{n,k}(t)\in\{1,\dots,L\}$. The resulting policy factorization is
\vspace{-1mm}
\begin{align}
\label{eq:policy_factorization_fedcritic}
&\pi_{\theta_n}\!\big(a_n(t)\mid o_n(t)\big)
=\nonumber\\
&\prod_{k=1}^{K}
\pi_{\theta_n}\!\big(u_{n,k}(t)\mid o_n(t)\big)\,
\pi_{\theta_n}\!\big(m_{n,k}(t)\mid o_n(t),u_{n,k}(t){=}1\big)\nonumber\\
&\quad\times
\pi_{\theta_n}\!\big(\ell_{n,k}(t)\mid o_n(t),u_{n,k}(t){=}1\big),
\end{align}
where invalid choices are removed via action masking to satisfy the intra-cell OFDMA constraint, and the per-BS sum-power constraint is enforced via the normalization in \eqref{eq:power_scale_final}.

The local critic $V_{\psi_n}$ estimates the expected team return conditioned on the local information state,
\vspace{-1mm}
\begin{equation}
V_{\psi_n}(o_n(t)) \approx \mathbb{E}\!\left[\sum_{\tau=t}^{t+H-1}\beta^{\tau-t}\, r(\tau)\ \bigg|\ o_n(t)\right],
\vspace{-1mm}
\end{equation}
where $r(t)$ is the team reward in \eqref{eq:team_reward_def} and $H$ is the rollout length. Given local rollout tuples $\{(o_n(t),a_n(t), r(t),o_n(t{+}1))\}$, the critic is trained via temporal-difference (TD) regression
\vspace{-1mm}
\begin{align}
\label{eq:critic_loss_fedcritic}
\mathcal{L}^{V}_n(\psi_n)
&=
\mathbb{E}\!\left[\big(V_{\psi_n}(o_n(t))-\widehat V_n(t)\big)^2\right],\\
\widehat V_n(t)
&\triangleq r(t)+\beta\,V_{\psi_n}\!\big(o_n(t{+}1)\big).
\vspace{-2mm}
\end{align}
The actor parameters $\theta_n$ are updated locally using a PPO surrogate objective with advantage estimates derived from $V_{\psi_n}$. We estimate advantages using generalized advantage estimation (GAE) with parameter $\lambda_{\mathrm{adv}}\in[0,1]$, which controls the bias--variance tradeoff of the advantage estimator \cite{MAPPO-estimator}. All updates above are decentralized and use only local trajectories.
\begin{algorithm}[t]
\caption{FedCritic-MAPPO at BS $n$}
\label{alg:fedcritic_mappo}
\DontPrintSemicolon
\footnotesize

\KwIn{
neighbors $\mathcal{N}(n)$; weights $\{w_{nj}\}$; gossip period $K_g$;
rollout length $H$; PPO epochs $E$; mini-batch size $B_{\mathrm{mb}}$;
$(\beta,\lambda_{\mathrm{adv}})$; power budget $P_n$; power levels $\{p^{(\ell)}\}_{\ell=1}^{L}$;
EMA factor $\alpha_O$.
}
\KwOut{actor params $\theta_n$, critic params $\psi_n$.}

\textbf{Init:}\quad actor $\pi_{\theta_n}$, critic $V_{\psi_n}$; set $\theta_n^{old}\leftarrow \theta_n$\;
\hspace*{3.05em}queues $\{Q_{n,m}\}$; occupancy EMA $\{\widehat O_{n,k}\}$\;

\For{$r=1,2,\ldots$}{
  \textbf{Neighbor info:}\quad receive $\{\widehat O_{j,k}\}_{j\in\mathcal{N}(n)}$; build $o_n(t)$\;
  \textbf{Buffer:}\quad $\mathcal{D}_n\leftarrow\emptyset$\;

  \For{$t=0,\ldots,H-1$}{
    \textbf{Observe:}\quad $o_n(t)$\;
    \textbf{Act:}\quad $a_n(t)\sim\pi_{\theta_n}(\cdot|o_n(t))$ (per-subcarrier mute/UE/power)\;
    \textbf{Project:}\quad map to $(x_{n,k,m}(t),p_{n,k}(t))$; enforce feasibility via masking/normalization\;
    \textbf{Step:}\quad execute; get $R_{n,m}(t)$; compute local $\tilde r_n(t)$ and obtain team reward $r(t)=\sum_{j=1}^{N}\tilde r_j(t)$\;
    \textbf{Update:}\quad $Q_{n,m}\!\leftarrow\![Q_{n,m}+R^{\min}_{n,m}-R_{n,m}(t)]^{+}$\;
    \hspace*{3.15em}$\widehat O_{n,k}\!\leftarrow\!\alpha_O\widehat O_{n,k}+(1-\alpha_O)a_{n,k}(t)$\;
    \textbf{Store:}\quad $(o_n(t),a_n(t), r(t),o_n(t{+}1))\in\mathcal{D}_n$\;
  }

  \textbf{Advantage (GAE):}\quad compute $\widehat A_n(t)$ using $V_{\psi_n}$ and $(\beta,\lambda_{\mathrm{adv}})$\;
  \textbf{PPO:}\quad set $\theta_n^{old}\leftarrow\theta_n$\;
  \For{$e=1,\ldots,E$}{
    \ForEach{mini-batch $\mathcal{B}\subset\mathcal{D}_n$, $|\mathcal{B}|=B_{\mathrm{mb}}$}{
      \textbf{Critic:}\quad TD regression update on $\mathcal{B}$\;
      \textbf{Actor:}\quad PPO clipped update on $\mathcal{B}$ using $\theta_n^{old},\widehat A_n$\;
    }
  }

  \If{$r \bmod K_g = 0$}{
    exchange $\{\psi_j\}_{j\in\mathcal{N}(n)}$;\;
    $\psi_n \leftarrow \sum_{j\in\mathcal{N}(n)\cup\{n\}} w_{nj}\,\psi_j$\;
  }
}
\end{algorithm}

\noindent\textbf{Gossip-based federated critic mixing:}
In interference-coupled multi-cell OFDMA, each BS's reward and transition dynamics depend on the (unobserved) actions of neighboring BSs, which can destabilize purely local critic learning. FedCritic addresses this without a centralized coordinator by federating only the critic over the interference graph $\mathcal{G}_I$ via lightweight gossip.

Let $W=[w_{nj}]\in\mathbb{R}^{N\times N}$ be a doubly-stochastic mixing matrix consistent with $\mathcal{G}_I$, i.e., $w_{nj}>0$ only if $j=n$ or $(n,j)\in\mathcal{E}_I$, and $\sum_{j=1}^{N}w_{nj}=1$, $\sum_{n=1}^{N}w_{nj}=1$. Every $K_g$ learning rounds (i.e., after each policy/critic update), BSs perform a neighbor aggregation step
\vspace{-1mm}
\begin{equation}
\label{eq:gossip_critic_fedcritic}
\psi_n \leftarrow \sum_{j\in\mathcal{N}(n)\cup\{n\}} w_{nj}\,\psi_j,\qquad \forall n,
\end{equation}
implemented by exchanging critic parameters with one-hop neighbors. Over each connected component of $\mathcal{G}_I$, repeated application of \eqref{eq:gossip_critic_fedcritic} drives $\{\psi_n\}$ toward consensus, improving the stability of value estimates under strong interference coupling. Theorem~\ref{thm:critic_consensus} formalizes this effect by establishing consensus (in mean-square) under periodic gossip mixing, and Proposition~\ref{prop:advantage_bias} shows that critic consensus implies vanishing disagreement in TD-based advantage estimates used by PPO. Algorithm~\ref{alg:fedcritic_mappo} summarizes the proposed FedCritic-MAPPO procedure.
\enlargethispage{-2.3\baselineskip}

We federate only the critic to stabilize learning under interference coupling while keeping actor execution local and signaling lightweight.
\vspace{-1mm}
\begin{thm}[Critic consensus under periodic gossip mixing]
\label{thm:critic_consensus}
Let $t$ denote the environment slot index and $s$ the learning round index.
Assume a fixed joint policy $\pi$ during the critic-update window. Each BS $n$ maintains critic parameters
$\psi_n^s\in\mathbb{R}^d$ and performs a local stochastic-gradient step
\vspace{-5mm}
\begin{align}
\psi_n^{s+\frac12} &= \psi_n^s-\eta_s\,g_n(\psi_n^s;\xi_n^s), \label{eq:local_sgd}\\
\mathbb{E}\!\left[g_n(\psi_n^s;\xi_n^s)\mid\mathcal{F}_s\right] &= \nabla F_n(\psi_n^s), \label{eq:unbiased_grad}
\vspace{-2mm}
\end{align}
where $\{\mathcal{F}_s\}$ is the natural filtration, $\xi_n^s$ is the local sample/mini-batch at iteration $s$,
$F_n:\mathbb{R}^d\!\to\!\mathbb{R}$ is the local critic loss, and $\eta_s>0$.

Every $K_g$ learning rounds (i.e., for $s\in\{0,K_g,2K_g,\ldots\}$), critics are mixed via gossip as
\vspace{-4mm}
{\small
\begin{equation}
\psi^{s+1}=(W_s\otimes I_d)\psi^{s+\frac12}, \qquad
W_s=\begin{cases}
W,& s\equiv 0\!\!\!\pmod{K_g},\\
I_N,& \text{otherwise},
\end{cases}
\label{eq:gossip_stack}
\vspace{-2mm}
\end{equation}
}
where $I_d$ and $I_N$ are the $d\times d$ and $N\times N$ identity matrices, respectively, and
$\psi^s=[(\psi_1^s)^\top,\ldots,(\psi_N^s)^\top]^\top$.

Assume $W$ is doubly stochastic and consistent with a connected interference graph, and define
\vspace{-2mm}
\begin{equation}
\sigma \triangleq \left\|W-\frac{1}{N}\mathbf{1}\mathbf{1}^\top\right\|_2<1,
\label{eq:sigma_def}
\end{equation}
where $\mathbf{1}\in\mathbb{R}^N$ is the all-ones vector and $\|\cdot\|_2$ is the spectral norm.
Assume: (A1) each $F_n$ is $L$-smooth and bounded below; (A2)
$\mathbb{E}[\|g_n(\psi_n^s;\xi_n^s)\|^2\mid\mathcal{F}_s]\le G^2$; (A3) $\sum_s \eta_s=\infty$ and $\sum_s \eta_s^2<\infty$.
Let $\bar\psi^s\triangleq \frac1N\sum_{n=1}^N \psi_n^s$ and $F(\psi)\triangleq \frac1N\sum_{n=1}^N F_n(\psi)$. Then
\begin{enumerate}
\item \textbf{Consensus:} $\displaystyle \lim_{s\to\infty}\mathbb{E}\!\left[\sum_{n=1}^N\|\psi_n^s-\bar\psi^s\|^2\right]=0$.
\item \textbf{Stationarity of the average:} $\displaystyle \liminf_{s\to\infty}\mathbb{E}\,\|\nabla F(\bar\psi^s)\|^2=0$.
\end{enumerate}
\end{thm}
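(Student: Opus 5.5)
We track the stacked disagreement vector $\tilde\psi^s \triangleq \psi^s-\mathbf{1}\otimes\bar\psi^s=(J\otimes I_d)\psi^s$, with $J\triangleq I_N-\frac1N\mathbf{1}\mathbf{1}^\top$, and the average $\bar\psi^s$ separately.

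\textbf{Average recursion.} Since $W$ is doubly stochastic, $\mathbf{1}^\top W_s=\mathbf{1}^\top$ for every $s$. Hence gossip leaves the average unchanged, and
\begin{equation*}
\bar\psi^{s+1}=\bar\psi^s-\frac{\eta_s}{N}\sum_n g_n(\psi_n^s;\xi_n^s).
\end{equation*}

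\textbf{Disagreement recursion.} We also have $JW_s=(W_s-\frac1N\mathbf{1}\mathbf{1}^\top)J$, so
\begin{equation*}
\tilde\psi^{s+1}=\big((W_s-\tfrac1N\mathbf{1}\mathbf{1}^\top)\otimes I_d\big)\big(\tilde\psi^s-\eta_s (J\otimes I_d)g^s\big).
\end{equation*}
The mixing factor has spectral norm $\sigma$ at gossip rounds and $1$ otherwise. Over a block of $K_g$ rounds the product therefore contracts by $\sigma$.

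Unrolling and using (A2) gives $\mathbb{E}\|g^s\|^2\le NG^2$ and $\|J\|\le1$. Applying Minkowski's inequality in $L^2$ yields
\begin{equation*}
\sqrt{\mathbb{E}\|\tilde\psi^{s}\|^2}\le \sigma^{\lfloor s/K_g\rfloor}\sqrt{\mathbb{E}\|\tilde\psi^0\|^2}+\sqrt{N}G\sum_{r<s}\eta_r\,\sigma^{\lfloor (s-r)/K_g\rfloor-1}.
\end{equation*}
The second term is a convolution of $\eta_r\to0$ (which follows from (A3)) with a summable geometric kernel, and so tends to zero. This is a standard Toeplitz-lemma argument, and it proves consensus (item 1). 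I expect the off-by-one bookkeeping in the floor exponents to be routine. For later use we also record the sharper fact $\sum_s\eta_s\,\mathbb{E}\|\tilde\psi^s\|^2<\infty$. To get it, square the bound via Cauchy--Schwarz so that $\mathbb{E}\|\tilde\psi^s\|^2\lesssim \sigma^{2\lfloor s/K_g\rfloor}+\sum_{r<s}\eta_r^2\sigma^{\lfloor (s-r)/K_g\rfloor}$. Multiplying by $\eta_s\le \max\eta$ and summing, the double sum is bounded by $\sum_r\eta_r^2$ times a geometric constant, which is finite by (A3).

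\textbf{Stationarity of the average.} Apply the descent lemma to $F$, which is $L$-smooth as an average of $L$-smooth functions, along the average recursion. Taking conditional expectation and using (A2),
\begin{equation*}
\mathbb{E}F(\bar\psi^{s+1})\le \mathbb{E}F(\bar\psi^s)-\eta_s\,\mathbb{E}\Big\langle\nabla F(\bar\psi^s),\tfrac1N\sum_n\nabla F_n(\psi_n^s)\Big\rangle+\tfrac{L}{2}\eta_s^2G^2.
\end{equation*}
Write $\frac1N\sum_n\nabla F_n(\psi_n^s)=\nabla F(\bar\psi^s)+e^s$. By smoothness, $\|e^s\|^2\le \frac{L^2}{N}\|\tilde\psi^s\|^2$, and Young's inequality gives
\begin{equation*}
-\langle\nabla F,\nabla F+e^s\rangle\le-\tfrac12\|\nabla F(\bar\psi^s)\|^2+\tfrac12\|e^s\|^2.
\end{equation*}
Now telescope and use that $F$ is bounded below:
\begin{equation*}
\sum_s\eta_s\,\mathbb{E}\|\nabla F(\bar\psi^s)\|^2\le 2\big(F(\bar\psi^0)-\inf F\big)+LG^2\sum_s\eta_s^2+\tfrac{L^2}{N}\sum_s\eta_s\,\mathbb{E}\|\tilde\psi^s\|^2<\infty.
\end{equation*}
Since $\sum_s\eta_s=\infty$, this forces $\liminf_s\mathbb{E}\|\nabla F(\bar\psi^s)\|^2=0$, which is item 2.

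\textbf{Main obstacle.} The delicate step is the weighted summability $\sum_s\eta_s\mathbb{E}\|\tilde\psi^s\|^2<\infty$ in the presence of the periodic (not every-round) contraction. I would handle it by grouping rounds into blocks of length $K_g$ and using the bound $\sigma^{\lfloor (s-r)/K_g\rfloor}\le \sigma^{(s-r)/K_g-1}$. After that, a geometric-series swap of summation order finishes it. One technical point: the statement asserts item 1 in expectation, and (A2) bounds only second moments. So every step must stay at the level of $L^2$ norms rather than almost-sure bounds, which is why I use Minkowski's inequality above.
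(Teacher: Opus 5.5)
Your proof is correct. It follows the paper's skeleton: gossip preserves $\bar\psi^s$, and the disagreement contracts by $\sigma$ once per block of $K_g$ rounds. It then applies the descent lemma to $F(\bar\psi^s)$ using the split $\nabla F(\bar\psi^s)+e^s$, the bound $\|e^s\|\le \frac{L}{\sqrt N}\|\tilde\psi^s\|$, and Young's inequality.

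Where you differ is in how the disagreement is controlled. The paper uses a one-step squared recursion, $\mathbb{E}[\|\delta^{s+\frac12}\|^2\mid\mathcal{F}_s]\le\|\delta^s\|^2+C_\delta\eta_s^2$, together with the factor $\sigma^2$ at gossip rounds. It then feeds only the limit $\mathbb{E}\|\delta^s\|^2\to0$ into the telescoped descent inequality. That is shorter, but as written it has two soft spots:
\begin{itemize}
\item Expanding the square produces the cross term $-2\eta_s\sum_n\langle\psi_n^s-\bar\psi^s,\nabla F_n(\psi_n^s)\rangle$. This term is of order $\eta_s\|\delta^s\|$ and has no definite sign, because the local losses differ. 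Repairing it requires a Young split of the form $(1+\epsilon)\|\delta^s\|^2+O(\eta_s^2/\epsilon)$ with $(1+\epsilon)^{K_g}\sigma^2<1$.
\item Knowing $\mathbb{E}\|\delta^s\|^2\to0$ alone does not make $\sum_s\eta_s\mathbb{E}\|\delta^s\|^2$ finite. So the paper's claim $\sum_s\eta_s\mathbb{E}\|\nabla F(\bar\psi^s)\|^2<\infty$ is not supported by what it cites, although the $\liminf$ conclusion can still be recovered by a contradiction argument.
\end{itemize}

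Your Minkowski unrolling works with $L^2$ norms linearly, so the cross term never appears. It also shows that consensus needs only $\eta_s\to0$. Your separate bound $\sum_s\eta_s\mathbb{E}\|\tilde\psi^s\|^2<\infty$, obtained via Cauchy--Schwarz and a swap of summation order, is exactly what makes the summed descent inequality rigorous. The price is some extra bookkeeping.

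One small point: the $-1$ in your exponent is unnecessary. Any $K_g$ consecutive rounds contain a gossip round, so the transition product over rounds $r,\dots,s-1$ has norm at most $\sigma^{\lfloor (s-r)/K_g\rfloor}$. Dropping the $-1$ also avoids $\sigma^{-1}$ in the degenerate case $\sigma=0$.
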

\proof See Appendix~\ref{app:proof_thm1}.
%
\enlargethispage{-2.3\baselineskip}
\begin{prop}[Implication for TD advantage estimation in Algorithm~\ref{alg:fedcritic_mappo}]
\label{prop:advantage_bias}
Under Theorem~\ref{thm:critic_consensus}, assume $V_\psi(\cdot)$ is Lipschitz in $\psi$ on $\mathcal{O}$, i.e.,
\vspace{-1mm}
\begin{equation}
\label{eq:V_lipschitz}
\hspace{6mm}|V_{\psi}(o)-V_{\psi'}(o)|\le L_V\|\psi-\psi'\|,\qquad \forall o\in\mathcal{O},
\end{equation}
where $L_V>0$ is a Lipschitz constant. For the rollout collected at learning round $s$ and any slot
$t$, define the one-step TD advantage at BS $n$ as
\vspace{-1mm}
\begin{equation}
\hspace{7mm}\widehat{A}^{(n)}_{t,s}\triangleq r(t)+\beta V_{\psi_n^s}\!\big(o_n(t{+}1)\big)-V_{\psi_n^s}\!\big(o_n(t)\big),
\end{equation}
and define $\widehat{A}^{(\mathrm{avg})}_{t,s}$ analogously using $V_{\bar\psi^s}$. Then
\begin{equation}
\mathbb{E}\big|\widehat{A}^{(n)}_{t,s}-\widehat{A}^{(\mathrm{avg})}_{t,s}\big|
\le (1+\beta)L_V\,\mathbb{E}\|\psi_n^s-\bar\psi^s\|,
\qquad \forall n,s,\ t. 
\end{equation}
Hence, the advantage disagreement vanishes as $\mathbb{E}\|\psi_n^s-\bar\psi^s\|\to 0$.
\end{prop}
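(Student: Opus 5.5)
The plan is to compare the two advantage estimators pointwise, along each realized rollout, and take expectations only at the very end. Fix a learning round $s$, a slot $t$ and a BS $n$.

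Both $\widehat{A}^{(n)}_{t,s}$ and $\widehat{A}^{(\mathrm{avg})}_{t,s}$ use the same reward $r(t)$ and the same observations $o_n(t)$ and $o_n(t{+}1)$. They differ only in the critic parameters, $\psi_n^s$ versus $\bar\psi^s$. Subtracting, the reward cancels exactly, which gives
\begin{equation*}
\widehat{A}^{(n)}_{t,s}-\widehat{A}^{(\mathrm{avg})}_{t,s}
=\beta\big[V_{\psi_n^s}(o_n(t{+}1))-V_{\bar\psi^s}(o_n(t{+}1))\big]
-\big[V_{\psi_n^s}(o_n(t))-V_{\bar\psi^s}(o_n(t))\big].
\end{equation*}
The triangle inequality then bounds the absolute value by $\beta|\cdot|+|\cdot|$. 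Each bracket is handled by the uniform Lipschitz assumption \eqref{eq:V_lipschitz}, applied at $o=o_n(t{+}1)$ and at $o=o_n(t)$ with $\psi=\psi_n^s$ and $\psi'=\bar\psi^s$. This yields the almost-sure bound
\begin{equation*}
\big|\widehat{A}^{(n)}_{t,s}-\widehat{A}^{(\mathrm{avg})}_{t,s}\big|\le(1+\beta)L_V\|\psi_n^s-\bar\psi^s\|.
\end{equation*}
Taking expectations on both sides, by monotonicity of expectation, gives the claimed inequality for every $n,s,t$.

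The one point that needs care is the random coupling between the observations and the parameters. The observations are drawn from the rollout at round $s$, so they are random and correlated with $\psi_n^s$ and $\bar\psi^s$. This is why I keep the bound pointwise until the end. Because \eqref{eq:V_lipschitz} holds uniformly over all $o\in\mathcal{O}$, no independence or conditioning argument is needed. I only need both observations to lie in $\mathcal{O}$, which holds by construction of $Z_n$. Integrability of $\|\psi_n^s-\bar\psi^s\|$ follows from the second-moment control established in Theorem~\ref{thm:critic_consensus}.

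For the vanishing claim, I first apply Jensen's inequality:
\begin{equation*}
\mathbb{E}\|\psi_n^s-\bar\psi^s\|\le\big(\mathbb{E}\|\psi_n^s-\bar\psi^s\|^2\big)^{1/2}\le\Big(\mathbb{E}\sum_{j=1}^N\|\psi_j^s-\bar\psi^s\|^2\Big)^{1/2}.
\end{equation*}
By the consensus part of Theorem~\ref{thm:critic_consensus}, the right-hand side tends to zero as $s\to\infty$. The advantage disagreement therefore vanishes uniformly in $t$ and $n$.

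I expect no real obstacle here. The only subtle step is justifying that the Lipschitz bound applies at random observation points, and the uniformity assumption in \eqref{eq:V_lipschitz} settles that.
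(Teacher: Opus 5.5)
Your proof is correct and follows the natural route that the paper leaves implicit: the paper gives no written proof of this proposition (its appendix proves only Theorem~\ref{thm:critic_consensus}), and the constant $(1+\beta)L_V$ reflects exactly your steps of reward cancellation, the triangle inequality, the uniform Lipschitz bound at $o_n(t)$ and $o_n(t{+}1)$, and monotonicity of expectation. Your Jensen step, $\mathbb{E}\|\psi_n^s-\bar\psi^s\|\le(\mathbb{E}\sum_j\|\psi_j^s-\bar\psi^s\|^2)^{1/2}$, correctly links the ``hence'' to the consensus claim. The integrability remark is not needed, since a pathwise bound between nonnegative variables carries over to expectations in $[0,\infty]$.
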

Theorem~\ref{thm:critic_consensus} characterizes the critic mixing dynamics and, via Proposition~\ref{prop:advantage_bias}, its impact on advantage estimation.
Global convergence of the full PPO-based MARL loop with nonlinear function approximation is beyond scope.
\subsection{Learning Objective}

Under the shared team reward \eqref{eq:team_reward_def}, the cooperative Dec--POMDP objective is
\vspace{-2mm}
{\small
\begin{equation}
\label{eq:team_objective_final}
\max_{\{\theta_n\}_{n=1}^{N}}\;
J(\theta)
=
\mathbb{E}
\Bigg[
\sum_{t=0}^{H-1}
\beta^t\, r(t)
\Bigg],
\end{equation}
}
where $\theta\triangleq(\theta_1,\ldots,\theta_N)$ and the joint policy factorizes as $\pi_{\theta}(a(t)\mid o(t))=\prod_{n=1}^{N}\pi_{\theta_n}(a_n(t)\mid o_n(t))$ with decentralized execution.
The mapping in \eqref{eq:action_def_final} induces $(\mathbf{X}(t),\mathbf{P}(t))$ and enforces \eqref{cons:ofdma_slot}--\eqref{cons:power-activity_slot} via action masking and power normalization.
Finally, periodic neighbor-only critic mixing in \eqref{eq:gossip_critic_fedcritic} provides serverless coordination over $\mathcal{G}_I$, aiming to improve stability in interference-dominant regimes.
\vspace{-1mm}
\section{Simulation Results}
\label{sec:sim_results}

We evaluate a reuse-1 multi-cell OFDMA downlink with $N=7$ BSs, $K=32$ subcarriers, and $M=8$ UEs per cell. Each BS allocates transmit power under a per-BS budget $P_{\max}=1.0$ using discretized levels $\{0.05,0.15,0.35,0.60,1.0\}$, with noise PSD $10^{-3}$. Large-scale gains follow a log-normal model with $(\mu_{\mathrm{PL}},\sigma_{\mathrm{PL}})=(-2.3,0.8)$ and cross-link scaling $1.2$, while small-scale fading evolves via a Gauss--Markov process with correlation $\rho=0.85$. Inter-cell coupling is defined by a line neighbor graph with radius $1$.
\enlargethispage{-2.3\baselineskip}

We compare four learning methods: B1: CTDE (MAPPO with centralized training and decentralized execution), B2: CTDE+VQ (CTDE augmented with virtual queues), B3: FedActor (federated training with periodic actor mixing across BSs), and Proposed: FedCritic (neighbor-gossiped critic with local execution), along with two non-learning heuristics: GREEDY and QoS. Training uses PPO for $250$ updates with rollout horizon $H=128$; since actions are selected per subcarrier, each BS contributes $H\!\times\!K$ samples per update, making $B_{\mathrm{mb}}=256$ feasible. We set $\beta=0.99$, $\lambda_{\mathrm{adv}}=0.95$, clipping $\epsilon=0.2$, $6$ epochs per update, mini-batch size $256$, and max gradient norm $0.5$, with entropy coefficient decayed from $0.010$ to $0.001$. We set the per-UE minimum-rate target to $R_{\min}=2.0$ and evaluate every $10$ updates over $6$ seeds, $6$ episodes/seed, and $24$ steps/episode; we report mean $\pm$ $95\%$ confidence intervals over the $6$ per-seed averages. We set $K_g=1$ (critic mixing every PPO update) to track strong inter-cell coupling in the interference-dominant reuse-1 regime.
\begin{figure}[t]
\centering
\includegraphics[width=1.01\linewidth]{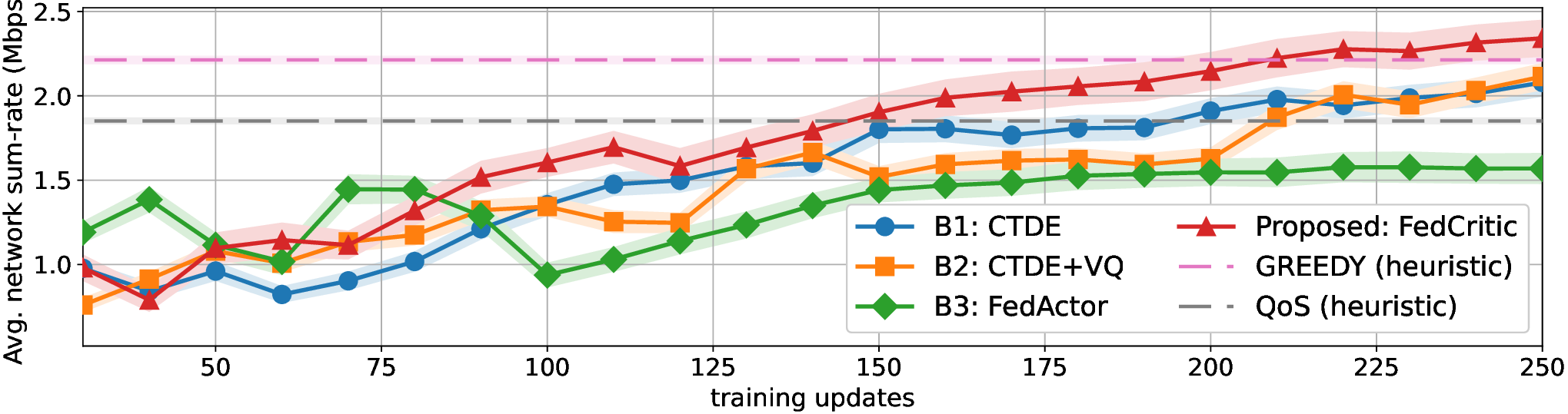}
\caption{Evaluation average sum-rate per-slot versus training updates.}
\label{fig:eval_return}
\end{figure}
\begin{figure}[t]
\centering
\includegraphics[width=0.82\linewidth]{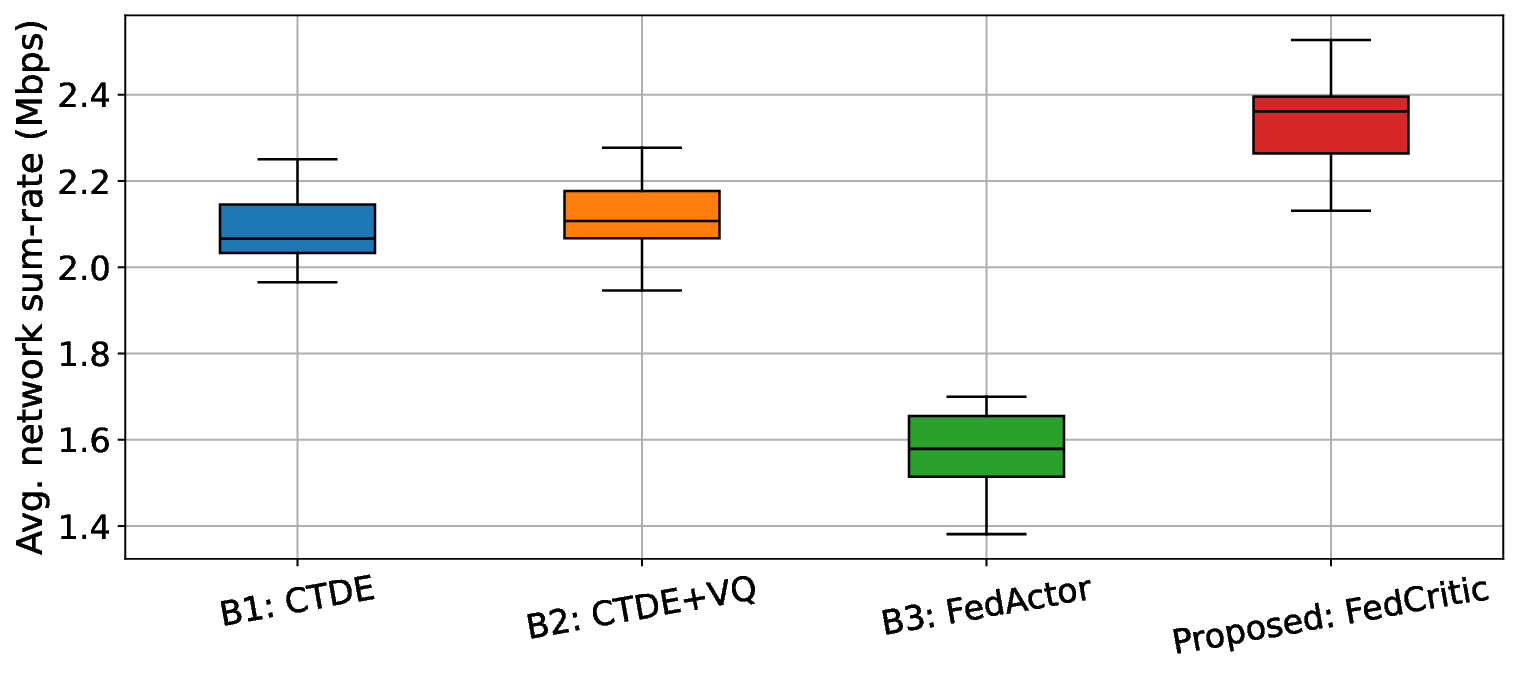}
\caption{Distribution of the per-slot average network sum-rate for the final learned policies.} 
\label{fig:boxplot_return}
\end{figure}
%

\textbf{Learning dynamics and final performance:}
Fig.~\ref{fig:eval_return} shows the per-slot average network sum-rate at evaluation checkpoints. Proposed-FedCritic exhibits the strongest learning dynamics and reaches the highest evaluation sum-rate among the learning-based methods. GREEDY and QoS appear as fixed references: GREEDY is strong but interference-unaware, whereas QoS is more conservative due to its constraint-driven allocation. Fig.~\ref{fig:boxplot_return} reports the distribution of the final per-slot average network sum-rate across evaluation episodes. Proposed-FedCritic achieves the highest median and a tighter spread than the CTDE baselines and B3-FedActor, indicating more reliable performance.

\textbf{Interference robustness:}
Fig.~\ref{fig:ICI} summarizes interference outcomes over active links. In Fig.~\ref{fig:ICI}(a), the CTDE baselines (B1/B2) operate in a substantially lower-SINR regime, whereas B3-FedActor and especially Proposed-FedCritic shift the operating point to higher SINR, reflecting more interference-aware scheduling under reuse-1. Fig.~\ref{fig:ICI}(b) reports the neighbor-collision rate. B1/B2 remain near fully colliding reuse patterns, while B3-FedActor and Proposed-FedCritic drastically reduce neighbor collisions, demonstrating learned coordination in the interference-rich setting.
\enlargethispage{-2.3\baselineskip}

\textbf{Distributional behavior (CDFs):}
Fig.~\ref{fig:CDF}(a) presents the CDF of per-active subcarrier SINR. Proposed-FedCritic produces a clear right-shift, reducing the probability of low-SINR transmissions under reuse-1 interference. Fig.~\ref{fig:CDF}(b) shows the CDF of per-active subcarrier-link rate (Mbps), where Proposed-FedCritic improves the high-rate tail by selectively activating and powering links that remain favorable under neighbor interference.

\textbf{Reuse structure and interpretability:}
Fig.~\ref{fig:activity_heatmap_compare} visualizes the learned reuse intensity as BS-by-subcarrier activity. B2: CTDE+VQ tends toward near-uniform high activity across the grid, consistent with its high neighbor-collision behavior. In contrast, Proposed-FedCritic learns structured spatial-frequency reuse with selective activation, which explains its improved SINR distribution and superior network-wide average sum-rate in Figs.~\ref{fig:eval_return} and~\ref{fig:boxplot_return}. Overall, Proposed-FedCritic achieves a better interference--throughput trade-off in an interference-dominant reuse-1 multi-cell OFDMA regime.

\begin{figure}[t]
\centering
\begin{subfigure}[t]{0.49\linewidth}
    \centering
    \includegraphics[width=\linewidth]{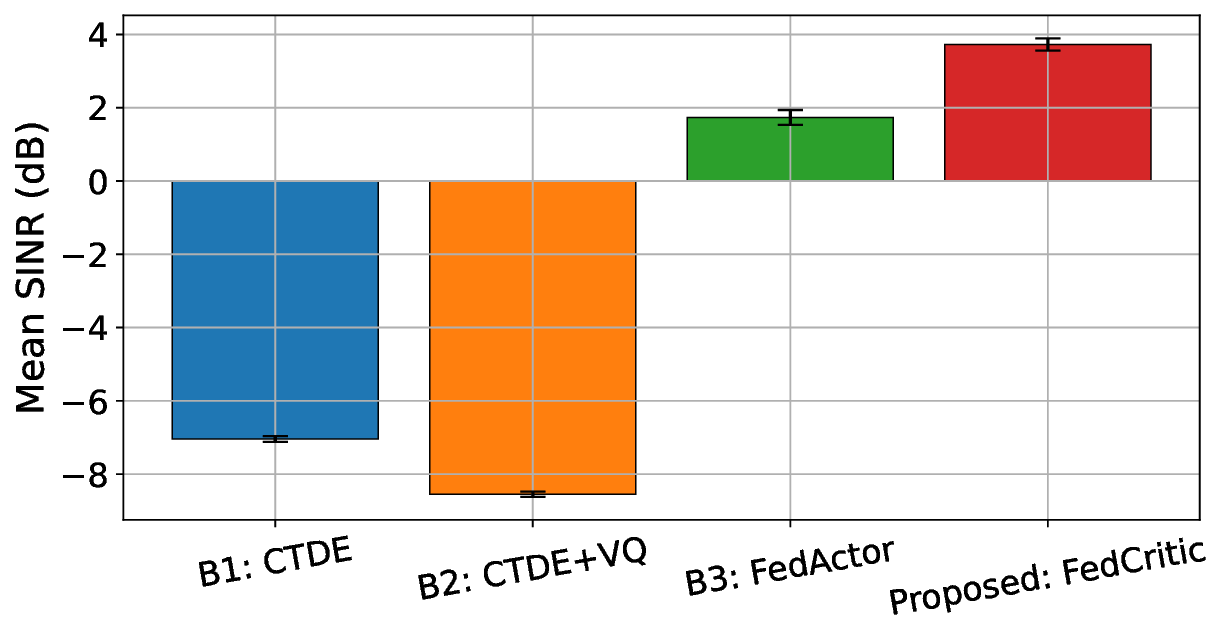}
    \caption{}
    \label{fig:sinr}
\end{subfigure}
\hfill
\begin{subfigure}[t]{0.49\linewidth}
    \centering
    \includegraphics[width=\linewidth]{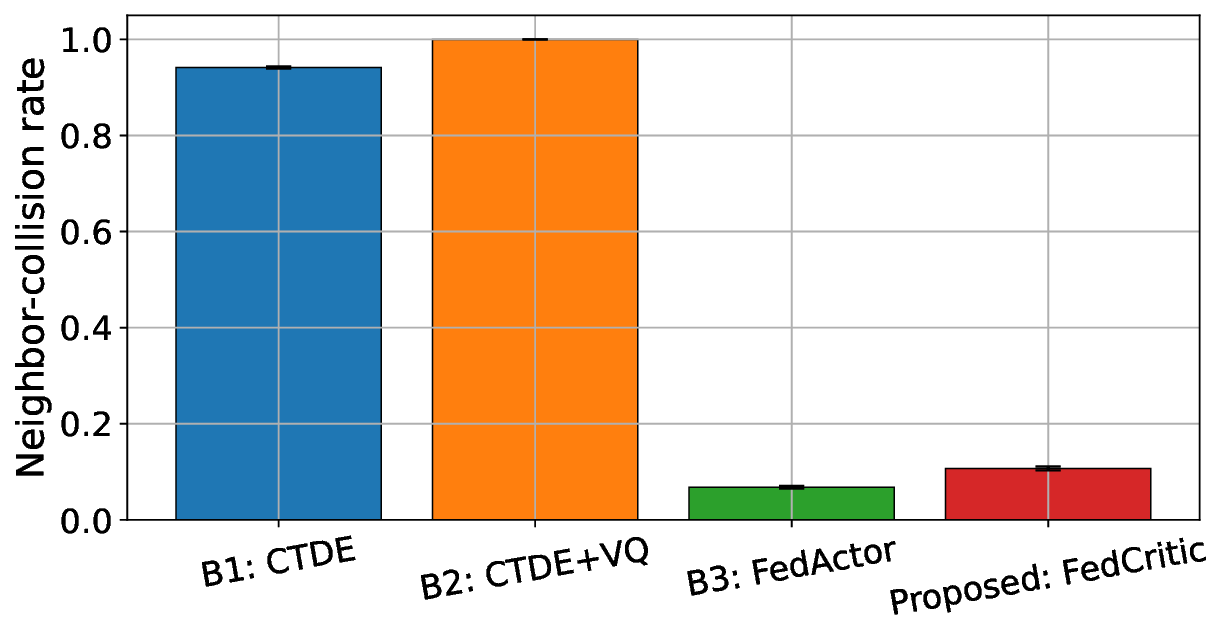}
    \caption{}
    \label{fig:coll}
\end{subfigure}
\caption{(a) Mean SINR and (b) neighbor-collision rate, over all active links across all BSs and subcarriers.}
\label{fig:ICI}
\end{figure}
\begin{figure}[t]
\centering
\begin{subfigure}[t]{0.49\linewidth}
    \centering
    \includegraphics[width=1.03\linewidth]{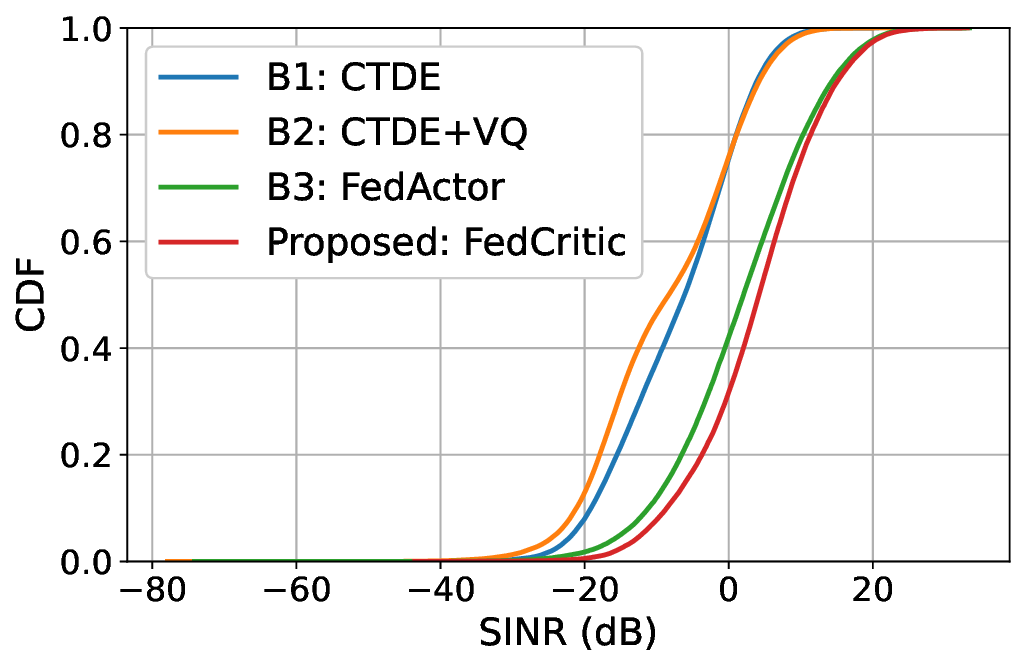}
    \caption{}
    \label{sinr_cdf}
\end{subfigure}
\hfill
\begin{subfigure}[t]{0.49\linewidth}
    \centering
    \includegraphics[width=1.03\linewidth]{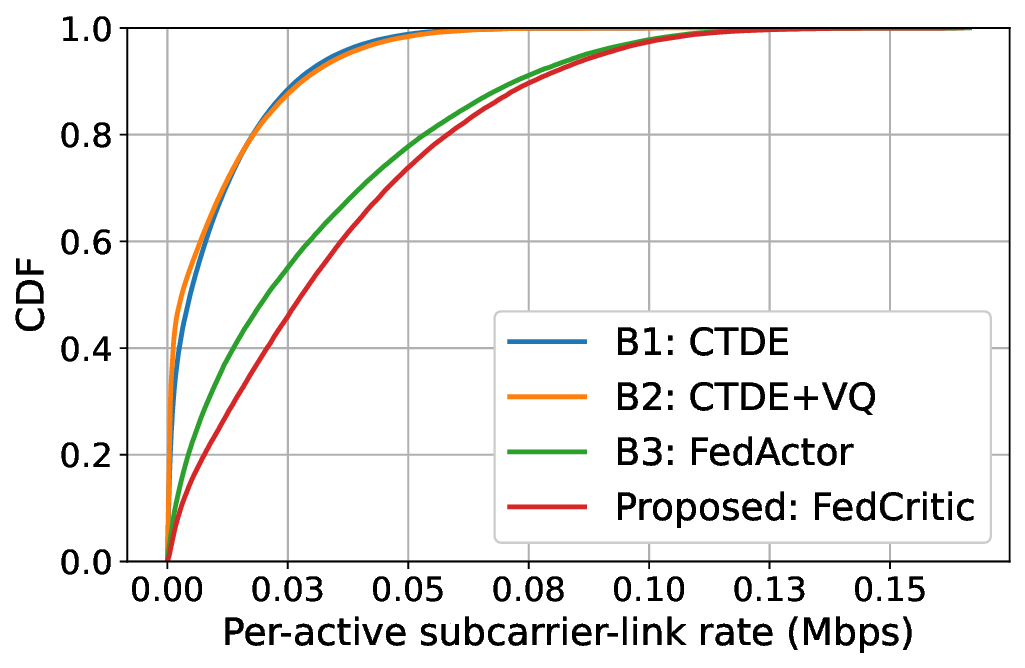}
    \caption{}
    \label{fig:cdf_active_link}
\end{subfigure}

\caption{(a) CDF of per-active subcarrier SINR (dB) and (b) CDF of per-active subcarrier-link rate (Mbps).}
\label{fig:CDF}
\end{figure}

\begin{figure}[t]
\centering

\begin{subfigure}[t]{0.5\linewidth}
    \centering
    \includegraphics[width=1.02\linewidth]{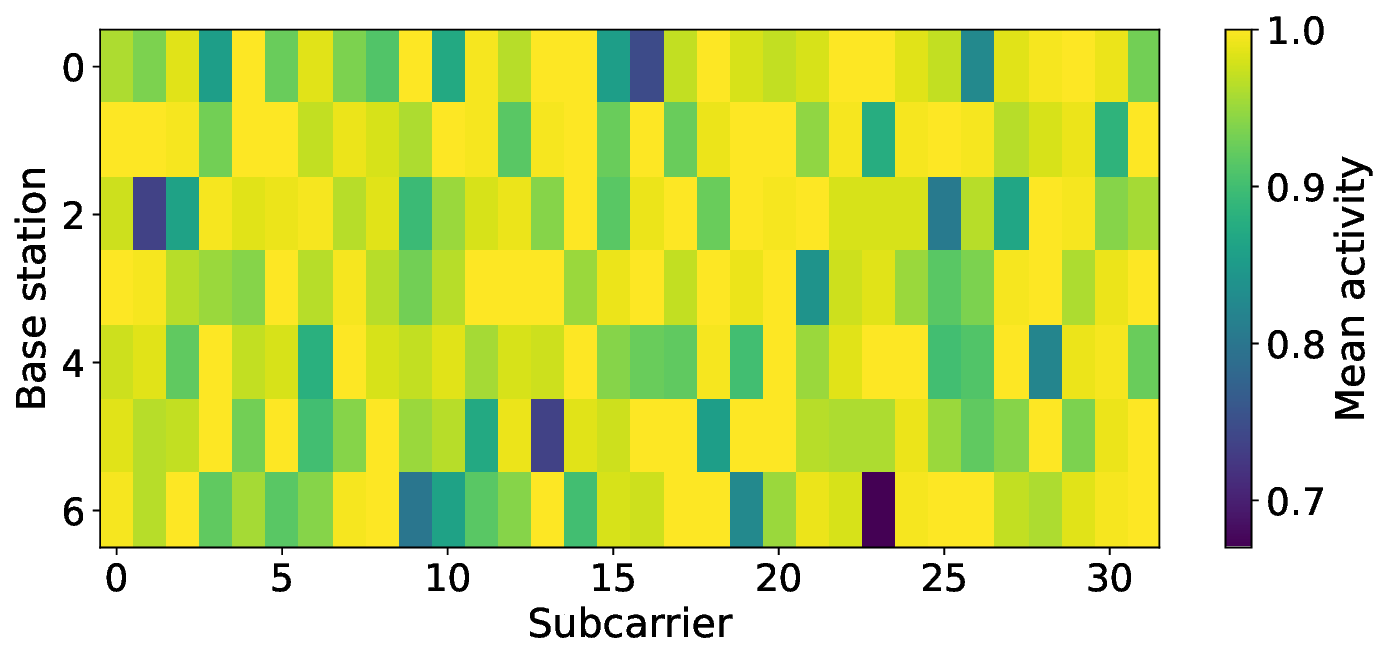}
    \caption{B2: CTDE + VQ.}
    \label{fig:activity_heatmap_b2}
\end{subfigure}
\hfill
\begin{subfigure}[t]{0.49\linewidth}
    \centering
    \includegraphics[width=1.02\linewidth]{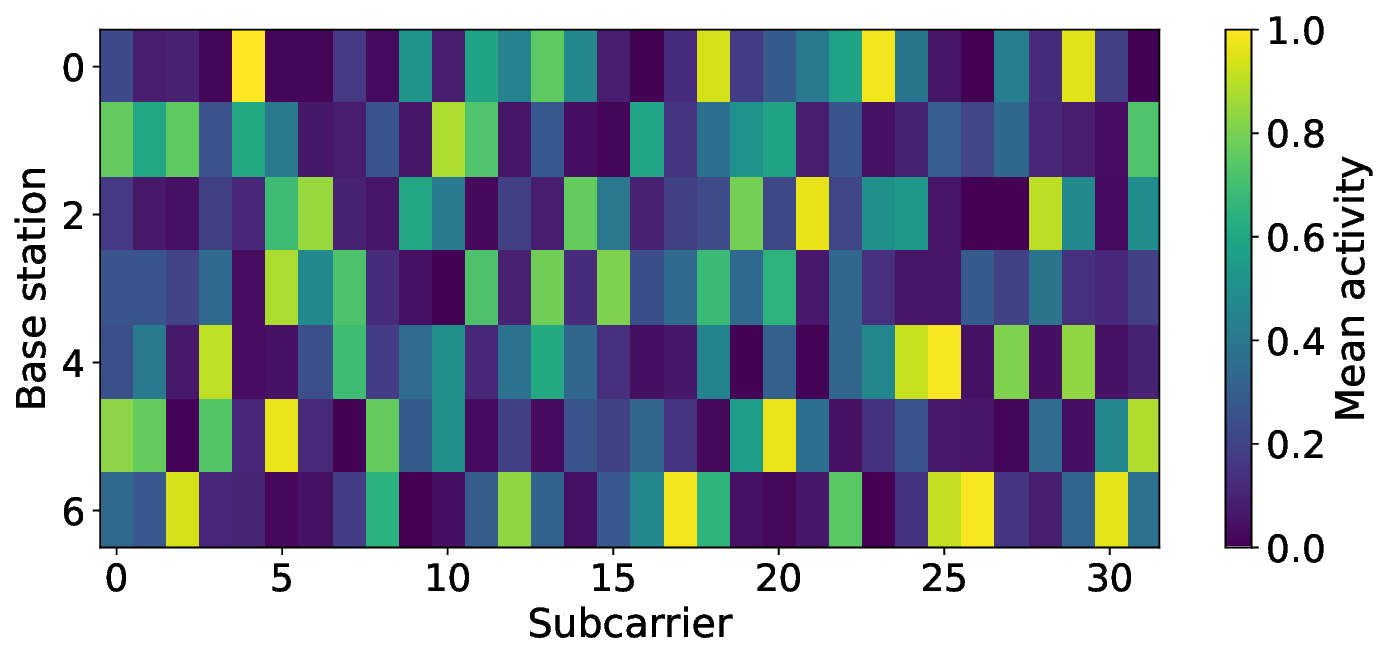}
    \caption{Proposed: FedCritic.}
    \label{fig:activity_heatmap_proposed}
\end{subfigure}

\caption{Activity (reuse intensity) heatmaps over BSs and subcarriers.}
\label{fig:activity_heatmap_compare}
\end{figure}

\section{Conclusion}\label{conclusion}
This paper studied interference-coupled reuse-1 multi-cell OFDMA resource management under practical information and coordination constraints. We proposed FedCritic, a fully decentralized MARL framework that removes the centralized critic bottleneck through neighbor-only gossip averaging while keeping execution local. Simulations showed that FedCritic improves SINR and sum-rate distributions, increases average sum-rate and fairness over CTDE baselines and greedy heuristics, and converges more stably with lower coordination overhead. Future work will extend the framework to larger and time-varying interference graphs, heterogeneous settings, richer PHY models, and asynchronous communication-efficient aggregation.
\enlargethispage{-2.3\baselineskip}
\vspace{-3mm}
\appendices
\renewcommand{\theequation}{\thesection.\arabic{equation}}
\section{Proof of Theorem~\ref{thm:critic_consensus} }
\label{app:proof_thm1}
\setcounter{equation}{0}

\begin{proof}
Let $\psi^s=[(\psi_1^s)^\top,\ldots,(\psi_N^s)^\top]^\top$ and
$\bar\psi^s=\frac{1}{N}\sum_{n=1}^N\psi_n^s$. Define the disagreement vector
$\delta^s \triangleq \psi^s-(\mathbf{1}\otimes \bar\psi^s)$, so that
$\|\delta^s\|^2=\sum_{n=1}^N\|\psi_n^s-\bar\psi^s\|^2$.
\vspace{1mm}

\textbf{(1) Average is preserved by mixing:}
Since $W_s$ is doubly stochastic for all $s$ (it is either $W$ or $I_N$), we have
$\mathbf{1}^\top W_s=\mathbf{1}^\top$, hence the gossip step preserves the average:
$\bar\psi^{s+1}=\bar\psi^{s+\frac12}$. The averaged iterate evolves as
\vspace{-4mm}
\begin{equation}
\bar\psi^{s+1}=\bar\psi^{s}-\eta_s\,\bar g^s,\qquad
\bar g^s\triangleq \frac{1}{N}\sum_{n=1}^N g_n(\psi_n^s;\xi_n^s).
\label{eq:avg_update_sketch}
\vspace{-2mm}
\end{equation}
Taking conditional expectation and using \eqref{eq:unbiased_grad} yields
$\mathbb{E}[\bar g^s\mid\mathcal{F}_s]=\frac{1}{N}\sum_{n=1}^N \nabla F_n(\psi_n^s)$.
\vspace{1mm}

\textbf{(2) Disagreement contracts at gossip rounds:}
At gossip rounds, $W_s=W$ and, with $J=\frac{1}{N}\mathbf{1}\mathbf{1}^\top$,
\vspace{-2mm}
{\small
\begin{align}
&\delta^{s+1}=\big((W-J)\otimes I_d\big)\,\delta^{s+\frac12}
\nonumber\\&\Rightarrow
\|\delta^{s+1}\|\le \|W-J\|_2\,\|\delta^{s+\frac12}\|
= \sigma\,\|\delta^{s+\frac12}\|,
\label{eq:contract_sketch}
\vspace{-2mm}
\end{align}
}
where $\sigma=\|W-J\|_2<1$ \cite{Theorem1}.
Between gossip rounds ($W_s=I_N$), $\delta^{s+1}=\delta^{s+\frac12}$.

The local SGD step adds bounded noise to disagreement. From
$\psi_n^{s+\frac12}=\psi_n^s-\eta_s g_n(\psi_n^s;\xi_n^s)$ and Assumption~(A2),
there exists $C_\delta>0$ such that
\vspace{-2mm}
{\small
\begin{equation}
\hspace{7mm}\mathbb{E}\big[\|\delta^{s+\frac12}\|^2\mid\mathcal{F}_s\big]
\le
\|\delta^s\|^2 + C_\delta \eta_s^2.
\label{eq:delta_growth_sketch}
\vspace{-1mm}
\end{equation}
}
Combining \eqref{eq:contract_sketch}--\eqref{eq:delta_growth_sketch} over periodic mixing yields a
supermartingale recursion with contraction factor $\sigma^2<1$ at gossip rounds and additive term $O(\eta_s^2)$.
Since $\sum_s \eta_s^2<\infty$, we obtain
\vspace{-2mm}
{\small
\begin{equation}
\lim_{s\to\infty}\mathbb{E}\|\delta^s\|^2 = 0,
\vspace{-2mm}
\end{equation}
}
which proves the consensus claim.
\vspace{1mm}

\textbf{(3) Stationarity of the averaged critic:}
By $L$-smoothness of ${\small F(\psi)=\frac{1}{N}\sum_{n=1}^N F_n(\psi)}$ and \eqref{eq:avg_update_sketch},
\vspace{-4mm}
{\small
\begin{equation}
\mathbb{E}\!\left[F(\bar\psi^{s+1})\right]
\le
\mathbb{E}\!\left[F(\bar\psi^s)\right]
-\eta_s\,\mathbb{E}\Big\langle \nabla F(\bar\psi^s),\frac{1}{N}\sum_{n=1}^N \nabla F_n(\psi_n^s)\Big\rangle
+O(\eta_s^2).
\label{eq:descent_sketch}
\end{equation}
}
Decompose $\frac{1}{N}\sum_n \nabla F_n(\psi_n^s)=\nabla F(\bar\psi^s)+e^s$, where by Lipschitz gradients,
$\|e^s\|\le \frac{L}{\sqrt{N}}\|\delta^s\|$. Using Young's inequality,
\vspace{-2mm}
{\small
\begin{equation}
\mathbb{E}\!\left[F(\bar\psi^{s+1})\right]
\le
\mathbb{E}\!\left[F(\bar\psi^s)\right]
-\tfrac{\eta_s}{2}\mathbb{E}\|\nabla F(\bar\psi^s)\|^2
+O(\eta_s)\mathbb{E}\|\delta^s\|^2
+O(\eta_s^2).
\label{eq:telescoping_sketch}
\end{equation}
}
Summing over $s$ and using that $F$ is bounded below (A1), $\sum_s \eta_s^2<\infty$ (A3), and
$\mathbb{E}\|\delta^s\|^2\to 0$ from Step~(2), yields
$\sum_s \eta_s\,\mathbb{E}\|\nabla F(\bar\psi^s)\|^2<\infty$.
Since $\sum_s \eta_s=\infty$, it follows that
\(
\liminf_{s\to\infty}\mathbb{E}\|\nabla F(\bar\psi^s)\|^2=0,
\)
proving the stationarity claim.
\end{proof}
\enlargethispage{-2.3\baselineskip}

%

\end{document}